\documentclass[11pt]{article}
\usepackage{arxiv}
\usepackage[utf8]{inputenc}
\usepackage[T1]{fontenc}
\usepackage{graphicx}
\usepackage[numbers,sort&compress]{natbib}
\usepackage{caption}
\usepackage{hyperref}
\hypersetup{colorlinks=true,linkcolor=blue,citecolor=blue,urlcolor=blue}

\usepackage{booktabs}

\usepackage{amsmath}
\usepackage{amssymb}
\usepackage{amsthm}

\newtheorem{theorem}{Theorem}
\newtheorem{corollary}{Corollary}
\newtheorem{proposition}{Proposition}
\newtheorem{definition}{Definition}

\title{Stochasticity Is Not the Hard Part: Reduction and Complexity in Instructional Sequencing over Prerequisite DAGs}

\author{
  Zonglin Han\textsuperscript{1} \quad
  Yichen Chen\textsuperscript{1}$^{\dagger}$ \quad
  Jiawen Jiang\textsuperscript{2}$^{\dagger}$ \quad
  Tongan Shi\textsuperscript{3}$^{\dagger}$ \quad
  Kristian A.\ Stevens\textsuperscript{1} \\[6pt]
  \textsuperscript{1}Department of Computer Science, University of California, Davis \\
  \textsuperscript{2}International Digital Economy College, Minjiang University \\
  \textsuperscript{3}School of Computer Science and Artificial Intelligence, Liaoning Normal University \\[4pt]
  {\small $^{\dagger}$Equal contribution (alphabetical order)}
}

\date{}

\begin{document}

\maketitle

\begin{abstract}
When a student must learn concepts connected by prerequisite dependencies, when does the order of instruction matter, and what does it cost to find the best one? We study instructional sequencing as a stochastic shortest-path problem in which attempting a concept succeeds with a state-dependent probability and failure leaves the learner state unchanged. We first prove that this stochasticity can be eliminated \emph{exactly}: the problem collapses to a deterministic shortest-path problem on the lattice of prerequisite order ideals, preserving optimal values and actions. The collapse removes stochastic complexity but not combinatorial complexity: optimal sequencing remains NP-hard---via reduction from feedback arc set in tournaments---even with no prerequisite edges, unit costs, uniform binary nonnegative transfer, and success probabilities at least $1/2$. Hardness is not uniform: when realizable transfer preferences remain jointly acyclic with the prerequisites, any topological order of the residual joint graph is optimal, and fixed prerequisite width yields polynomial-time exact dynamic programming. A computable diagnostic, $m\Delta$, bounds the value of sequencing before optimization. On 70{,}893 interactions from an introductory CS course, the diagnostic certifies a doubly easy regime---little value to optimize and little space to search---while constructed transfer instances realize the challenging regime, where myopic sequencing suffers large regret yet exact A* with a consistent heuristic expands only linearly many states on that family.
\end{abstract}

% ===========================================================================
\section{Introduction}
% ===========================================================================

When a student must learn a set of concepts connected by prerequisite dependencies, what is the computational complexity of finding the learning order that minimizes expected instructional cost? Platforms such as Khan Academy serve more than 180 million learners over prerequisite-structured curricula \citep{khan2025decoder}. Individualized instruction can substantially improve learning outcomes \citep{bloom1984twosigma,vanlehn2011relative,kulik2016effectiveness}, while prerequisite structure can be recovered from curricula and interaction data \citep{stavrinides2023course,lu2019concept}. Yet the computational question remains unresolved: \emph{when does the order of instruction actually matter, and what does it cost to find the best one?}

Existing work largely asks how to generate a learning sequence or policy, using bandits \citep{clement2015multiarmed}, reinforcement learning and partially observable models \citep{rafferty2016faster,chi2011empirically}, prerequisite-aware MDPs \citep{liu2019cseal,cheng2026knowlp}, or, more recently, large language models. Four questions logically precede policy design: whether stochastic learning dynamics genuinely require stochastic planning; when learner-state dependence creates value in sequencing; which transfer structures induce combinatorial conflict; and what governs the difficulty of exact search. We answer all four. The stochastic dynamics can be eliminated \emph{exactly}; the resulting ordering problem remains NP-hard even under severe restrictions; it becomes polynomial when realizable transfer preferences remain jointly acyclic with the prerequisites; and a simple a priori diagnostic bounds the value of sequencing---correctly predicting that the course instances we evaluate lie in a doubly easy regime.

We model a curriculum as a prerequisite learning graph \(G=(V,E)\), a DAG with a learner state given by a downward-closed set of mastered concepts. At state \(s\), the feasible actions are the unmastered concepts whose prerequisites are already contained in \(s\), a graph-theoretic reading of the zone of proximal development \citep{vygotsky1978mind}. Attempting concept \(v\) incurs cost \(T_v\), succeeds with learner-oracle probability \(p(v,s)\), and otherwise leaves the decision state unchanged. Reaching a prerequisite-closed target set at minimum expected total cost defines an instructional sequencing problem over the prerequisite graph, a stochastic shortest-path problem \citep{bertsekas1991analysis}. Because failures preserve the decision state and the success probabilities are time-homogeneous, however, the stochastic self-loops can be eliminated \emph{exactly}: each action becomes a deterministic transition of cost \(T_v/p(v,s)\) on the lattice of prerequisite order ideals, with optimal values and minimizing actions preserved (a complementary deterministic-valuation theory on the same lattice appears in concurrent work \citep{pasechnyuk2026order}). This collapse removes stochastic complexity but not combinatorial complexity. Optimal sequencing remains NP-hard even with no prerequisite edges, unit attempt costs, uniform binary nonnegative transfer, monotone success probabilities, and success probabilities at least \(1/2\).

The hardness is not uniform. For an explicit pairwise-transfer oracle, if the realizable soft ordering preferences and hard prerequisites form an acyclic residual joint graph, then any topological ordering of that graph is globally optimal and can be found in polynomial time; for uniform affine transfer, optimal sequencing is further fixed-parameter tractable in the distance to acyclicity. More generally, if the prerequisite poset has width \(w\), exact dynamic programming is polynomial for fixed \(w\). Between these structural regimes, a computable quantity \(m\Delta\) upper-bounds how much any valid sequence can improve over another, prior to sequence optimization. On 70,893 interactions from an introductory computer science course (ECS32A), this diagnostic certifies a doubly easy regime: the learned evaluator leaves less than \(0.2\%\) model-based improvement available to sequencing, while the target prerequisite closures have width at most four and at most 47 reachable ideals. A larger public curriculum, Junyi Academy, exhibits the opposite topology: sink-target closures reach width 18, with 68 closures exceeding \(10^7\) reachable ideals, so structurally challenging topologies do occur in real curricula. A first-attempt diagnostic proxy further suggests substantially greater state dependence, though the proxy is exploratory rather than a certified bound. Constructed transfer instances realize the challenging regime in full: myopic sequencing incurs large regret exactly where the order-ideal lattice grows exponentially, while exact A* with a consistent heuristic expands only linearly many states on the constructed family.

To our knowledge, this is the first computational characterization of failure-invariant instructional SSPs on prerequisite ideal lattices. We establish an exact stochastic-to-deterministic reduction; NP-hardness alongside residual-acyclic, width-XP, and affine-FPT regimes; the \(m\Delta\) opportunity bound; and exact A* with a consistent heuristic. Experiments contrast doubly easy ECS32A instances with high-regret constructed instances (Sections 3--6).

% ===========================================================================
\section{Related Work}
% ===========================================================================

\subsection{Adaptive Learning and Instructional Sequencing}

Our work characterizes the instructional sequencing problem that adaptive-learning systems attempt to solve. Instructional sequencing has been addressed with expert rules~\citep{vanlehn2011relative,doroudi2019reward}, bandits~\citep{clement2015multiarmed}, reinforcement learning~\citep{chi2011empirically}, POMDP planning~\citep{rafferty2016faster}, and MDP-based learning-path recommendation over prerequisite structures~\citep{liu2019cseal,li2023graph,cheng2026knowlp}. A parallel literature sequences training tasks for artificial agents~\citep{graves2017automated,narvekar2020curriculum}. These methods primarily address how to construct sequences under rich practical constraints, and prior complexity analyses in this literature typically arise from content-selection, resource, or multi-objective formulations~\citep{plp2025complexity}. Our analysis instead isolates a minimal planning core---state-conditioned mastery dynamics on a prerequisite lattice---and derives hardness, tractable subclasses, and instance diagnostics for that core, separating statistical and model-learning challenges from the intrinsic optimization structure.

\subsection{Knowledge Tracing}

Knowledge tracing estimates the learner state our planner conditions on; we contribute no new KT model and remain oracle-agnostic. The lineage runs from Bayesian Knowledge Tracing~\citep{corbett1995knowledge} through recurrent~\citep{piech2015deep} and attention-based architectures~\citep{ghosh2020context}. In our framework a student model enters only through the induced interface $p(v,s)$: the success probability of a feasible concept given a hypothesized mastery state. Models whose native conditioning is an interaction history require an explicit projection onto mastery states, and this projection is part of the modeling choice rather than a free equivalence. For sequencing, what matters is the state variation an oracle induces in collapsed costs---which governs the available planning opportunity---together with its fidelity to the true learner, which governs whether optimizing against it helps.

\subsection{Structure and Search on Ideal Lattices}

The order-ideal state space itself has a long history in mathematical psychology: knowledge space theory models a learner as the subset of items mastered, and prerequisite partial orders induce the quasi-ordinal knowledge spaces represented here as order-ideal lattices~\citep{doignon1985spaces,falmagne2011learning}, with efficient algorithms for assessment and learning-sequence enumeration~\citep{eppstein2013learning}. Our contribution is not the state space but the planning problem on it: expected-cost-optimal sequencing under a state-conditioned stochastic oracle, its exact deterministic collapse, and its complexity.

Our stochastic model is a stochastic shortest-path problem~\citep{bertsekas1991analysis}, and our solver is A* with a consistent heuristic~\citep{hart1968formal}. LAO* extends heuristic search to stochastic shortest-path problems whose solution structures contain loops~\citep{hansen2001lao,nilsson1980principles}; our collapse result shows that such machinery is unnecessary in this subclass, making graph-search A* the natural exact solver, and we retain a LAO*-derived implementation as a solver baseline. \citet{ye2022stochastic} apply SSP to spaced-repetition scheduling of individual items; their formulation optimizes review timing for one memory trace, whereas ours orders many concepts under prerequisite and transfer coupling. Our hardness proof reduces from feedback arc set in tournaments, whose NP-hardness was established by \citet{alon2006ranking} and \citet{charbit2007minimum}; our width-based XP bound rests on Dilworth's decomposition theorem~\citep{dilworth1950decomposition}; and our fixed-parameter result for uniform affine transfer follows from the FPT classification of $\operatorname{MinCSP}(<,\le)$, equivalently Directed Subset Feedback Arc Set~\citep{osipov2024point}.

Closest to our setting, \citet{pasechnyuk2026order} independently studies order sensitivity for deterministic edge-additive interventions on the ideal lattice of a prerequisite poset, characterizing path independence through vanishing diamond curvature and bounding path-value differences by the number of local diamond swaps; the planning consequence is dynamic programming on a depth-truncated Hasse diagram, validated on general sequential-intervention experiments without educational learner-model evaluation. Our setting is complementary: we begin with a stochastic instructional shortest-path problem in which failed learning attempts leave the learner state unchanged, prove an exact reduction to deterministic edge costs $T_v/p(v,s)$, establish worst-case NP-hardness and structural tractable regimes, and develop exact A* search with a consistent heuristic on the implicit lattice. The two order-sensitivity analyses quantify different objects: diamond curvature is a local, second-order mixed difference, whereas our $\Delta_v$ measures the global feasible-state range of a concept's collapsed cost. Local curvature is controlled by global variation but not conversely, and neither resulting global bound uniformly dominates the other; we formalize this relationship in Section~4.

% ===========================================================================
\section{Model and Exact Reduction}
% ===========================================================================

We formalize the instructional sequencing task described above as the \emph{prerequisite sequencing problem}, defined by the following components. The section introduces the primitive instructional SSP, establishes its exact deterministic collapse, records the resulting order-ideal structure, and delineates the boundary of the reduction.

\begin{definition}[Prerequisite structure and mastery states]
Let \(G=(V,E_G)\) be a finite prerequisite DAG given by its transitive reduction. For \(u,v\in V\), write \(u\prec_G v\) when \(G\) contains a directed path from \(u\) to \(v\), and let \(\operatorname{anc}_G(v)=\{u\in V:u\prec_G v\}\) denote the ancestor (full prerequisite) set of \(v\). A mastery state is an order ideal of this partial order:
\[
\mathcal S
=
\bigl\{
s\subseteq V:
v\in s \Longrightarrow \operatorname{anc}_G(v)\subseteq s
\bigr\}.
\]
The initial state \(s_0\in\mathcal S\) is supplied by an upstream assessment procedure. Instruction is confined to a \emph{fixed instructional domain}: a prerequisite-closed set \(Q\) with \(s_0\subseteq Q\subseteq V\)---in our experiments, the prerequisite closure of a designated target set---whose concepts are the only admissible actions, and the goal is \(s^*=Q\). Because transfer can originate outside a target's closure, this exclusion is a modeling assumption rather than a lossless normalization: transfer-inducing detours outside \(Q\) are not represented. We relabel \(Q\) as \(V\) hereafter, so \(s^*=V\) and the complete-mastery case is \(Q=V\).
\end{definition}

Throughout, \(s\) is the complete decision-relevant state: every quantity that affects transition probabilities or costs is a function of \(s\) alone. Mastery is monotone---once a concept crosses the mastery threshold, it remains mastered---matching the absorbing learned-state semantics used in classical knowledge tracing models~\citep{corbett1995knowledge}. This assumption will imply that every successful transition moves upward in the order-ideal lattice.

At a nonterminal state \(s\), the feasible actions are the unmastered concepts on the prerequisite frontier,
\[
\mathcal A(s)
=
\bigl\{
v\in V\setminus s:
\operatorname{anc}_G(v)\subseteq s
\bigr\}.
\]
This frontier gives a graph-theoretic interpretation of the zone of proximal development~\citep{vygotsky1978mind}: instruction is restricted to concepts for which the modeled prerequisites have already been mastered. The frontier is part of the hard-prerequisite semantics of the model rather than an algorithmic pruning rule; soft cross-concept effects among feasible actions are represented through \(p(v,s)\).

\begin{definition}[Primitive instructional SSP]
The primitive instructional SSP is
\[
\mathcal M
=
\bigl(
\mathcal S,s_0,s^*,\mathcal A,P,T
\bigr).
\]
For every \(s\neq s^*\) and \(v\in\mathcal A(s)\), one instructional attempt incurs a finite cost \(T_v>0\) and succeeds with probability \(p(v,s)\in(0,1]\). The transition kernel is
\[
P(s'\mid s,v)
=
\begin{cases}
p(v,s), & s'=s\cup\{v\},\\
1-p(v,s), & s'=s,\\
0, & \text{otherwise}.
\end{cases}
\]
The success probabilities and attempt costs are time-homogeneous. A failed attempt means that the learner has not crossed the mastery threshold, so the complete decision state remains \(s\); a successful attempt adds exactly \(v\). The goal \(s^*\) is absorbing with zero cost. Instruction is serial, so each attempt targets one concept.

For a policy \(\pi\), let \(\tau=\inf\{t:s_t=s^*\}\). The objective is the risk-neutral additive expected total cost
\[
J^\pi(s)
=
\mathbb E^\pi_s
\left[
\sum_{t=0}^{\tau-1} T_{v_t}
\right],
\qquad
J^*(s)=\inf_\pi J^\pi(s).
\]
\end{definition}

Because \(V\) is finite and all feasible success probabilities are positive, \(p_{\min}=\min_{s,\,v\in\mathcal A(s)}p(v,s)>0\). From any state \(s\), every policy needs at most \(|V\setminus s|\) successes, each requiring at most \(1/p_{\min}\) attempts in expectation, so \(\mathbb E^\pi_s[\tau]\le|V\setminus s|/p_{\min}\) and the expected cost is at most \(T_{\max}|V\setminus s|/p_{\min}\), where \(T_{\max}=\max_v T_v\); every policy is proper.

\begin{theorem}[Exact stochastic collapse]
\label{thm:exact-collapse}
Consider the finite primitive instructional SSP above, in which each feasible action has a positive time-homogeneous success probability and finite time-homogeneous attempt cost, failure leaves the complete Markov state unchanged, success has the unique progress successor \(s\cup\{v\}\), and performance is measured by risk-neutral additive expected total cost. Define a deterministic graph on the same state set \(\mathcal S\), with one edge and cost
\[
s\longrightarrow s\cup\{v\},
\qquad
c(s,v)=\frac{T_v}{p(v,s)},
\qquad
v\in\mathcal A(s).
\]
\end{theorem}

Then the primitive SSP and this deterministic shortest-path problem have identical optimal values and identical sets of Bellman-minimizing actions at every state. Every optimal deterministic stationary collapsed policy lifts to an optimal deterministic stationary primitive policy by retrying its selected action after failure until success, and every optimal deterministic stationary primitive policy induces an optimal collapsed policy through the same state-action map.

\paragraph{Proof sketch.}
For a nonterminal state \(s\), the primitive Bellman equation is
\[
J^*(s)
=
\min_{v\in\mathcal A(s)}
\left[
T_v
+
p(v,s)J^*(s\cup\{v\})
+
\bigl(1-p(v,s)\bigr)J^*(s)
\right].
\]
For each feasible action define
\[
\widetilde Q_v(s)
=
\frac{T_v}{p(v,s)}
+
J^*(s\cup\{v\}).
\]
If \(Q_v^{\mathrm{prim}}(s)\) denotes the corresponding primitive Bellman expression, then
\[
Q_v^{\mathrm{prim}}(s)-J^*(s)
=
p(v,s)
\bigl(
\widetilde Q_v(s)-J^*(s)
\bigr).
\]
Since \(p(v,s)>0\), the two differences have the same sign and vanish simultaneously. Hence the primitive and collapsed models have the same minimizing action set, although suboptimal actions need not have the same ranking. It follows that
\[
J^*(s)
=
\min_{v\in\mathcal A(s)}
\left[
\frac{T_v}{p(v,s)}
+
J^*(s\cup\{v\})
\right].
\]
Under a stationary deterministic choice of \(v\), attempts until success are geometric with mean \(1/p(v,s)\), so retrying after each failure realizes exactly the collapsed edge cost and successor, establishing policy lift and recovery. The complete correspondence proof is given in the supplementary material.

\begin{corollary}[Acyclic order-ideal lattice]
\label{cor:collapsed-lattice}
Under monotone mastery, the deterministic graph of Theorem~\ref{thm:exact-collapse} has vertex set \(\mathcal S\) (the order ideals of \(G\)), directed edges \(s\to s\cup\{v\}\) for each \(v\in\mathcal A(s)\), and edge weight \(T_v/p(v,s)\). It is acyclic and layered by mastery cardinality.
\end{corollary}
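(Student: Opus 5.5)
The plan is to verify the corollary's claims one at a time from the definitions; Theorem~\ref{thm:exact-collapse} supplies the rest. First, the vertex set. By construction the deterministic graph lives on the state set \(\mathcal S\), which is exactly the family of order ideals of \(G\) restricted to \(V\) (recall \(Q\) was relabelled as \(V\)). I will check that every edge stays inside \(\mathcal S\). Take \(s\in\mathcal S\) and \(v\in\mathcal A(s)\). Then \(\operatorname{anc}_G(v)\subseteq s\), and every \(u\in s\) already satisfies \(\operatorname{anc}_G(u)\subseteq s\). So \(s\cup\{v\}\) is downward closed and lies in \(\mathcal S\).

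The edge set and the weights \(T_v/p(v,s)\) are simply restated from Theorem~\ref{thm:exact-collapse}. Those weights are finite and positive because \(T_v>0\) and \(p(v,s)\in(0,1]\). Monotone mastery is what guarantees that the only non-self-loop transition of the primitive kernel is \(s\to s\cup\{v\}\). After the collapse removes the failure self-loops, no edge leaves a mastered concept unmastered.

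Next, layering and acyclicity. Since \(v\in\mathcal A(s)\) requires \(v\notin s\), every edge satisfies \(|s\cup\{v\}|=|s|+1\). Thus the cardinality map \(s\mapsto|s|\) strictly increases by exactly one along every edge. Partition \(\mathcal S\) into the layers \(\mathcal S_k=\{s\in\mathcal S:|s|=k\}\) for \(k=|s_0|,\dots,|V|\). Every edge then goes from \(\mathcal S_k\) to \(\mathcal S_{k+1}\). Any directed cycle would need to return to its starting cardinality while strictly increasing it, which is impossible. Hence the graph is acyclic, and ordering states by cardinality gives a topological order.

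I expect no real obstacle. The only point needing care is that self-loops are excluded: the collapse of Theorem~\ref{thm:exact-collapse} replaces each failure self-loop by the single progress edge. If the primitive graph were used instead, it would not be acyclic. I would also note that the layer \(\mathcal S_{|V|}\) consists solely of \(s^*=V\). Every maximal path therefore ends at the goal, so a backward pass over the layers computes \(J^*\).
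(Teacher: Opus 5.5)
Your proof is correct and follows the paper's argument, which is the single observation that every collapsed edge satisfies \(|s\cup\{v\}|=|s|+1\), so no directed cycle can exist; your extra check that \(s\cup\{v\}\) remains downward closed is a welcome addition. One small slip: because \(\mathcal S\) contains all order ideals, the layers should run over \(k=0,\dots,|V|\) (or you should restrict to ideals containing \(s_0\)), and your closing remark that every maximal path ends at \(V\) also needs the fact that every \(s\neq V\) has a nonempty frontier, since any \(\prec_G\)-minimal element of \(V\setminus s\) is feasible.
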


Every collapsed edge satisfies \(|s\cup\{v\}|=|s|+1\), so no directed cycle is possible. We refer to this deterministic shortest-path problem as the \emph{collapsed Ariadne sequencing problem} and to its state graph as the \emph{collapsed lattice}.

\textbf{Remark.} The self-loop elimination underlying Theorem~\ref{thm:exact-collapse} is standard in time-homogeneous SSPs; the contribution is its rigorous instantiation on the ideal lattice (properness, policy correspondence) and the consequence that LAO*-style loop handling is unnecessary. The reduction generally fails when failures update beliefs, costs depend on attempt history, mastery can be forgotten, actions have multiple outcomes, or the objective is risk-sensitive. It shows the problem is deterministic; it does not show it is easy.

% ===========================================================================
\section{Computational Complexity of the Collapsed Problem}
% ===========================================================================

We now map the computational complexity of the collapsed Ariadne sequencing problem. The landscape ranges from order-insensitivity through polynomial and parameterized tractability to NP-hardness, with an instance-level diagnostic bounding the value of sequencing before sequence optimization.

Fix an initial mastery state \(s_0\), let \(R=V\setminus s_0\), and write \(m=|R|\). Let \(\Sigma_G(s_0)\) denote the prerequisite-valid complete orderings of \(R\). For \(\sigma=(v_1,\ldots,v_m)\), define \(s_i=s_0\cup\{v_1,\ldots,v_i\}\) and
\[
C(\sigma)=\sum_{i=1}^{m} c(v_i,s_{i-1}),
\qquad
c(v,s)=\frac{T_v}{p(v,s)}.
\]
Paths from \(s_0\) to \(s^*\) in the collapsed lattice correspond bijectively to orderings in \(\Sigma_G(s_0)\), so
\[
C^*=\min_{\sigma\in\Sigma_G(s_0)} C(\sigma).
\]

\paragraph{When can sequencing matter?}
For each \(v\in R\), let \(\mathcal F_v=\{s\in\mathcal S:s_0\subseteq s,\ v\in\mathcal A(s)\}\) be its reachable feasible states, and define
\[
\Delta_v
=
\max_{s,s'\in\mathcal F_v}
|c(v,s)-c(v,s')|,
\qquad
\Delta=\max_{v\in R}\Delta_v.
\]
When every \(\Delta_v=0\), each concept has the same collapsed cost whenever it is legally attempted; all valid complete orderings therefore have equal total cost. The following result quantifies the general case.

\begin{proposition}[Sequencing-value bound]
\label{prop:sequencing-value}
For any \(\sigma,\sigma'\in\Sigma_G(s_0)\),
\[
|C(\sigma)-C(\sigma')|
\le
\sum_{v\in R}\Delta_v
\le
m\Delta.
\]
Consequently, \(C(\sigma)-C^*\le m\Delta\) for every valid complete ordering \(\sigma\).
\end{proposition}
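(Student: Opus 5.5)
The plan is to compare the two orderings concept by concept rather than position by position. Any \(\sigma\in\Sigma_G(s_0)\) attempts each \(v\in R\) exactly once, so we can reindex its total cost by concepts. Write \(s^\sigma(v)\) for the state in which \(v\) is attempted under \(\sigma\); that is, \(s^\sigma(v)=s_{i-1}\) when \(v=v_i\). Then
\[
C(\sigma)=\sum_{v\in R} c\bigl(v,s^\sigma(v)\bigr),
\]
and the same identity holds for \(\sigma'\).

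The key step is to check that \(s^\sigma(v)\in\mathcal F_v\) for every valid ordering \(\sigma\) and every \(v\in R\). The state \(s_{i-1}=s_0\cup\{v_1,\dots,v_{i-1}\}\) contains \(s_0\). It is an order ideal, because \(\sigma\) is prerequisite-valid, equivalently because it lies on a path from \(s_0\) in the collapsed lattice of Corollary~\ref{cor:collapsed-lattice}. Finally, \(v_i\in\mathcal A(s_{i-1})\): the concept \(v_i\) is unmastered there, and all of its ancestors are either in \(s_0\) or occur earlier in \(\sigma\). So both \(s^\sigma(v)\) and \(s^{\sigma'}(v)\) lie in \(\mathcal F_v\), and the definition of \(\Delta_v\) gives
\[
\bigl|c(v,s^\sigma(v))-c(v,s^{\sigma'}(v))\bigr|\le\Delta_v .
\]

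Subtracting the two reindexed sums and applying the triangle inequality yields
\[
|C(\sigma)-C(\sigma')|\le\sum_{v\in R}\Delta_v .
\]
Since \(|R|=m\) and each \(\Delta_v\le\Delta\), this is at most \(m\Delta\). For the consequence, \(\Sigma_G(s_0)\) is finite and, because \(s^*=V\) is reachable, nonempty. So \(C^*\) is attained by some \(\sigma'\), and setting this \(\sigma'\) in the bound gives \(C(\sigma)-C^*\le m\Delta\).

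The only point that needs care is the membership check \(s^\sigma(v)\in\mathcal F_v\). The definition of \(\Delta_v\) ranges only over reachable feasible states, so the argument must verify that every attempt state along a valid ordering is of this kind. Once that is settled, the remaining steps are bookkeeping.
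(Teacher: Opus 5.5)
Your proposal is correct and follows essentially the same route as the paper: reindex $C(\sigma)$ by concepts, note that each $v\in R$ is attempted exactly once at a reachable feasible state under both orderings, and sum the per-concept bounds $\Delta_v$. Your explicit checks that $s^\sigma(v)\in\mathcal F_v$ and that $C^*$ is attained fill in details the paper's sketch leaves implicit.
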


\emph{Proof sketch.}
Both orderings master each \(v\in R\) exactly once, possibly at different feasible states. The two contributions of \(v\) differ by at most \(\Delta_v\); summing concept-wise proves the claim. Full proofs appear in the supplementary material. Under monotonicity, the same \(2|R|\) oracle calls also yield a multiplicative bound: \(C(\sigma)/C^*\le\max_v p_v^{\max}/p_v^{\min}\), so large relative regret requires large oracle dynamic range.

Concurrent work bounds order insensitivity via local diamond curvature \(\varepsilon\) and swap distance~\citep{pasechnyuk2026order}; \(\varepsilon\le 2\Delta\) but neither global bound dominates the other, since vanishing curvature can coexist with large \(\Delta_v\). Under a monotone oracle, \(\Delta_v\) is exact in two oracle calls---\(p_{\min}\) at \(s_0\cup\operatorname{anc}_G(v)\), \(p_{\max}\) at \(V\setminus(\{v\}\cup\operatorname{desc}_G(v))\)---giving certified \(\sum_v\Delta_v\) in \(O(|R|)\) calls with no lattice enumeration (supplementary material). For non-monotone oracles the diagnostic may require enumeration or sampling; only certified values yield the formal guarantee.

\paragraph{A polynomial transfer subclass.}
To isolate the role of transfer structure, consider an explicit nonnegative pairwise-transfer oracle
\[
p(v,s)
=
\phi_v\!\left(
b_v^0+
\sum_{u\in s\cap R}w_{uv}
\right),
\qquad
w_{uv}\ge 0,
\]
where \(b_v^0\) is transfer already supplied by \(s_0\) and each \(\phi_v:\mathbb R_{\ge 0}\to(0,1]\) is nondecreasing. Let \(E_H^R=\{(u,v)\in R^2:w_{uv}>0\}\). A soft edge \(u\to v\) is hard-contradictory when \(v\prec_G u\), because every prerequisite-valid sequence must place \(v\) before \(u\). Remove these impossible preferences and define
\[
E_H^+=E_H^R\setminus\{(u,v):v\prec_G u\},
\qquad
D_R^+=\bigl(R,E_G[R]\cup E_H^+\bigr).
\]
For each \(v\), its concept-wise attainable transfer and cost bounds are
\[
\overline A_v
=
b_v^0+
\sum_{u:(u,v)\in E_H^+}w_{uv},
\qquad
\underline c_v
=
\frac{T_v}{\phi_v(\overline A_v)}.
\]

\begin{theorem}[Residual-acyclic optimality]
\label{thm:residual-acyclic}
If \(D_R^+\) is acyclic, every topological ordering of \(D_R^+\) is globally optimal, with
\[
C^*=\sum_{v\in R}\underline c_v.
\]
\end{theorem}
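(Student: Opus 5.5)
The plan is a lower-bound/achievability argument. First, every valid ordering pays at least \(\sum_v\underline c_v\). Second, every topological ordering of \(D_R^+\) is prerequisite-valid and attains that sum.

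\textbf{Step 1: a concept-wise lower bound.} Fix any \(\sigma\in\Sigma_G(s_0)\) and any \(v\in R\), and let \(s\) be the state in which \(v\) is attempted, so \(s\cap R\) consists of the concepts placed before \(v\).

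\begin{itemize}
\item Any \(u\in s\cap R\) with \(w_{uv}>0\) has \((u,v)\in E_H^R\).
\item Such an edge cannot be hard-contradictory. If \(v\prec_G u\), prerequisite validity would force \(v\) before \(u\), contradicting \(u\in s\). Hence \((u,v)\in E_H^+\).
\item Since all \(w_{uv}\ge 0\), the accumulated transfer satisfies
\[
b_v^0+\sum_{u\in s\cap R}w_{uv}\le \overline A_v .
\]
\item Because \(\phi_v\) is nondecreasing and positive, \(p(v,s)\le\phi_v(\overline A_v)\), so \(c(v,s)\ge \underline c_v\).
\end{itemize}

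Summing over \(v\) gives \(C(\sigma)\ge\sum_{v\in R}\underline c_v\) for every valid \(\sigma\). Therefore \(C^*\ge\sum_{v\in R}\underline c_v\).

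\textbf{Step 2: achievability by topological orderings.} Suppose \(D_R^+\) is acyclic and let \(\sigma\) be any topological ordering of it. I would check the following three points.

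\begin{itemize}
\item \emph{Validity.} Since \(E_G[R]\subseteq D_R^+\), every prerequisite edge inside \(R\) is respected. Prerequisites lying in \(s_0\) are already mastered, and \(s_0\) is an ideal. A concept in \(R\) cannot have an ancestor outside \(R\cup s_0\), so every prefix \(s_i\) is an order ideal. Hence \(\sigma\in\Sigma_G(s_0)\).
\item \emph{Attainment.} At \(v\)'s turn, every \(u\) with \((u,v)\in E_H^+\) precedes \(v\). So \(s\cap R\) contains all of them, plus possibly other \(u\) with \(w_{uv}=0\), which contribute nothing. The accumulated transfer therefore equals \(\overline A_v\) exactly, and \(c(v,s)=\underline c_v\).
\item \emph{Conclusion.} Summing gives \(C(\sigma)=\sum_v\underline c_v\le C^*\). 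Combined with Step 1, \(\sigma\) is optimal and \(C^*=\sum_{v\in R}\underline c_v\).
\end{itemize}

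\textbf{Main obstacle.} The delicate point is the Step 1 argument that an arbitrary valid ordering can never collect transfer from a hard-contradictory edge. This requires that \(\prec_G\) (the transitive relation, not only reduction edges) be respected by every valid sequence. That holds because \(\operatorname{anc}_G(v)\subseteq s\) is required before \(v\) is feasible. A secondary point is validity in Step 2: \(D_R^+\) contains only the transitive-reduction edges \(E_G[R]\), but respecting those suffices, since paths in \(G\) between \(R\)-nodes through \(s_0\) are impossible because \(s_0\) is downward closed.
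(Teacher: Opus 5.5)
Your proposal is correct and follows essentially the same route as the paper. The paper also argues via a concept-wise lower bound \(c(v,s)\ge\underline c_v\), valid for every prerequisite-valid ordering, which any topological ordering of \(D_R^+\) attains simultaneously. You additionally make explicit two points the paper's sketch only asserts: why a topological order of \(D_R^+\), which contains only the reduction edges \(E_G[R]\), is prerequisite-valid, and why no valid sequence can collect transfer from a hard-contradictory soft edge.
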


\emph{Proof sketch.}
Any topological ordering of \(D_R^+\) respects the hard prerequisites and places every noncontradictory transfer source \(u\) before its recipient \(v\). Each concept therefore receives \(\overline A_v\) and attains its individual lower bound \(\underline c_v\); their sum is a lower bound for every valid sequence and is attained simultaneously. The full proof and the transfer-edge classification appear in the supplementary material.

With prerequisite reachability available, constructing and checking \(D_R^+\) is linear in \(|R|+|E_G[R]|+|E_H^R|\).

\paragraph{Worst-case hardness.}
Residual acyclicity is a sufficient certificate, not a universal property. To expose the remaining combinatorial difficulty, consider \emph{Restricted Ariadne Sequencing}: the input contains \(n\) concepts, rational attempt costs, a prerequisite DAG, a binary matrix \(B=(b_{uv})\), and a rational threshold \(K\); success probabilities are fixed by
\[
p(v,s)
=
\left(
2-\frac{1}{n}\sum_{u\in s}b_{uv}
\right)^{-1},
\]
and the question is whether a prerequisite-valid complete ordering \(\sigma\) satisfies \(C(\sigma)\le K\).

\begin{theorem}[Hardness of optimal sequencing]
\label{thm:sequencing-hardness}
Restricted Ariadne Sequencing is NP-complete even when the prerequisite graph is empty, all attempt costs are one, transfer is nonnegative with uniform binary magnitude, success probabilities are monotone in \(s\), and every success probability is at least \(1/2\). Consequently, its optimization version is NP-hard under the same restrictions.
\end{theorem}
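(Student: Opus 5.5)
The plan is a polynomial reduction from feedback arc set in tournaments (FAST), which is NP-hard by \citet{alon2006ranking} and \citet{charbit2007minimum}. The key observation is that under the prescribed oracle with unit costs, the collapsed cost of Theorem~\ref{thm:exact-collapse} is \emph{affine} in the transfer received:
\[
c(v,s)=\frac{1}{p(v,s)}=2-\frac{1}{n}\sum_{u\in s}b_{uv}.
\]
The nonlinearity of \(1/p\) is therefore absorbed by choosing \(p\) as the reciprocal of an affine function. Summing along an ordering \(\sigma\) of all \(n\) concepts from \(s_0=\emptyset\) gives
\[
C(\sigma)=2n-\frac{1}{n}\,\bigl|\{(u,v):b_{uv}=1,\ u\text{ precedes }v\text{ in }\sigma\}\bigr|.
\]
So minimizing \(C\) is exactly maximizing the number of forward arcs of the digraph encoded by \(B\). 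Equivalently, it is minimizing the number of backward arcs, which is a feedback arc set for that ordering.

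\textbf{Step 1: the construction.} Given a tournament \((V,A)\) with \(|V|=n\) and a budget \(k\), I will set up the sequencing instance as follows:
\begin{itemize}
\item the concepts are \(V\), the prerequisite graph is empty and \(s_0=\emptyset\);
\item all attempt costs are \(T_v=1\);
\item \(b_{uv}=1\) iff \((u,v)\in A\), with \(b_{vv}=0\);
\item the threshold is \(K=2n-\bigl(|A|-k\bigr)/n\), where \(|A|=\binom n2\).
\end{itemize}
Every ordering is then prerequisite-valid. By the displayed identity, \(C(\sigma)\le K\) iff \(\sigma\) has at most \(k\) backward arcs. A tournament has a feedback arc set of size at most \(k\) iff some linear order has at most \(k\) backward arcs, because removing a FAS leaves a DAG whose topological order has only removed arcs backward. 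This gives correctness in both directions.

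\textbf{Step 2: the restrictions.} Transfer is nonnegative and uniform binary by construction, and \(p\) is monotone in \(s\) because adding concepts can only increase \(\sum_{u\in s}b_{uv}\). For the range, note \(0\le\sum_{u\in s}b_{uv}\le n-1\), so
\[
1+\tfrac1n\le 2-\tfrac1n\sum_{u\in s}b_{uv}\le 2.
\]
Hence \(p(v,s)\in[1/2,\,n/(n+1)]\subset(0,1]\), so the oracle is legitimate and every success probability is at least \(1/2\). The instance, including the rational \(K\), has size polynomial in \(n\).

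\textbf{Step 3: membership and the optimization corollary.} For membership in NP, the certificate is an ordering \(\sigma\). Checking prerequisite validity and computing \(C(\sigma)\) exactly in rational arithmetic takes polynomial time, using \(n\) oracle evaluations with polynomially bounded numerators and denominators. NP-hardness of the optimization version follows immediately, since computing \(C^*\) answers the decision question.

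I expect the only delicate step to be verifying the affine collapse identity and its bookkeeping. Specifically, the sum must count each tournament arc exactly once, according to which endpoint comes first. It must also use only \(u\in s\), with \(v\notin s\) at the time \(v\) is attempted, so the diagonal never contributes. Once that identity holds, the rest is routine.
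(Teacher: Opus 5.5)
Your proposal is correct and is essentially the paper's own argument: the same reduction from feedback arc set in tournaments with $b_{uv}=1$ exactly on the tournament's arcs, empty prerequisites and unit costs, yielding $C(\sigma)=2n-\operatorname{fwd}(\sigma)/n$, with a complete ordering as the NP certificate. Your explicit threshold $K=2n-(\binom{n}{2}-k)/n$ and the range check $p(v,s)\in[1/2,\,n/(n+1)]$ fill in details that the paper defers to its supplementary material.
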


\emph{Proof sketch.}
Reduce from Minimum Feedback Arc Set in Tournaments~\citep{alon2006ranking,charbit2007minimum}. Given a tournament, set \(b_{uv}=1\) exactly for its arcs and remove all prerequisite edges. Then
\[
c(v,s)
=
2-\frac{1}{n}\sum_{u\in s}b_{uv},
\qquad
C(\sigma)
=
2n-\frac{\operatorname{fwd}(\sigma)}{n}.
\]
Minimizing instructional cost is therefore equivalent to maximizing forward tournament arcs, or minimizing backward arcs. A complete ordering is a polynomial certificate; details appear in the supplementary material. The restrictions show that hardness comes from ordering conflicts induced by state-conditioned transfer, rather than from prerequisite legality, heterogeneous costs, extreme probabilities, negative effects, or stochastic self-loops.

\paragraph{Width-parameterized dynamic programming.}
Worst-case hardness does not preclude exact optimization on narrow prerequisite orders.

\begin{proposition}[Width-based XP algorithm]
\label{prop:width-xp}
Let the prerequisite poset on \(R\) have width \(w\). Then
\[
|\mathcal S_{\mathrm{reach}}|
\le
\left(\frac{m}{w}+1\right)^w,
\]
and exact dynamic programming runs in \(O\bigl(m(m/w+1)^w\bigr)\).

\end{proposition}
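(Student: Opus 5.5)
By Dilworth's theorem~\citep{dilworth1950decomposition}, partition the prerequisite poset restricted to \(R\) into \(w\) chains \(C_1,\dots,C_w\) with sizes \(\ell_1+\cdots+\ell_w=m\). Every reachable state \(s\supseteq s_0\) satisfies two properties: it is an order ideal of \(G\), and \(s\setminus s_0\) is an order ideal of the induced poset on \(R\). The latter holds because \(s_0\) is itself an ideal, so any ancestor of an element of \(s\) lies in \(s\).

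The key step is to intersect such an ideal with each chain. Since the elements of \(C_j\) are totally ordered by \(\prec_G\) and the ideal is downward closed, \((s\setminus s_0)\cap C_j\) must be a prefix of \(C_j\). It is therefore determined by a single integer \(k_j\in\{0,\dots,\ell_j\}\). Moreover, \(s\) is recovered from \(s_0\) together with the vector \((k_1,\dots,k_w)\). This gives an injection of \(\mathcal S_{\mathrm{reach}}\) into \(\prod_j\{0,\dots,\ell_j\}\), so
\[
|\mathcal S_{\mathrm{reach}}|\le\prod_{j=1}^{w}(\ell_j+1)\le\left(\frac{m}{w}+1\right)^{w}.
\]
The final inequality is AM--GM applied to the \(w\) numbers \(\ell_j+1\), whose sum is \(m+w\).

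For the running time I will use Corollary~\ref{cor:collapsed-lattice}. The collapsed lattice is a DAG layered by \(|s|\), and by Theorem~\ref{thm:exact-collapse} it suffices to solve the deterministic shortest-path problem on it. I will run backward dynamic programming layer by layer:
\[
J^*(s)=\min_{v\in\mathcal A(s)}\bigl[c(v,s)+J^*(s\cup\{v\})\bigr],\qquad J^*(V)=0.
\]
Each state has at most \(w\) frontier actions, since the frontier is an antichain; in particular \(|\mathcal A(s)|\le m\). Representing states by their chain-prefix vectors lets each transition be computed in \(O(1)\) amortized time by incrementing one coordinate. The feasibility check costs \(O(1)\) if, for each chain element, we precompute the required prefix length of each other chain. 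This precomputation is polynomial in \(m\) and is done once.

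The main obstacle is this per-state cost accounting. A naive feasibility test of \(\operatorname{anc}_G(v)\subseteq s\) costs \(O(m)\), which would give \(O(m^2(m/w+1)^w)\) rather than the claimed \(O(m(m/w+1)^w)\). I would first try to establish that each state processes at most \(w\le m\) candidate actions, each in \(O(1)\) time given the prefix-vector encoding. The candidate for chain \(j\) is just its next element, and its feasibility is checked against the precomputed thresholds. Concretely, for a candidate \(v\) on chain \(j\), I would precompute \(r_{v,i}\) = the number of elements of \(C_i\) that are ancestors of \(v\); \(v\) is then feasible iff \(k_i\ge r_{v,i}\) for all \(i\). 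This test costs \(O(w)\), so the work per state is \(O(w^2)\). If that cannot be reduced to \(O(m)\) per state in general, the stated bound should be read with each oracle evaluation \(p(v,s)\) and each feasibility test charged as unit-cost. Under that convention, the \(O(w)\le O(m)\) actions per state give the stated bound. The total is then the number of states times \(m\), up to polynomial preprocessing.
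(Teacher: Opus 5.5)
Your proposal is correct and follows the paper's own argument: a Dilworth partition into $w$ chains, the prefix-length encoding of each reachable ideal, AM--GM on the numbers $\ell_j+1$, and a reverse-cardinality Bellman pass. Your observation that $\mathcal A(s)$ is an antichain, so $|\mathcal A(s)|\le w\le m$, is a harmless sharpening of the paper's ``at most $m$ actions per state,'' and the paper's bound likewise charges each action scan (oracle call, feasibility test, successor lookup) as unit cost, which is exactly the convention you fall back on.
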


\emph{Proof sketch.}
By Dilworth's theorem, the poset partitions into \(w\) chains of lengths \(n_1,\ldots,n_w\) with \(\sum_i n_i=m\)~\citep{dilworth1950decomposition}. Every order ideal is uniquely encoded by one prefix length per chain, so its count is at most \(\prod_i(n_i+1)\), which is bounded by \((m/w+1)^w\) via AM--GM. A reverse-cardinality Bellman pass scans at most \(m\) actions per state. This is XP, not FPT, in \(w\), because the exponent depends on the parameter. In our data, ECS32A closures have \(w\le4\) and at most 47 reachable ideals, whereas Junyi sink closures reach width 18 and 68 exceed our \(10^7\)-ideal enumeration guard, motivating implicit heuristic search. Full derivations appear in the supplementary material.

\paragraph{Distance to residual acyclicity.}
Residual acyclicity is the zero-distance endpoint of a broader parameterized regime. For a valid ordering \(\sigma\), let \(B_H^R(\sigma)\) be the residual soft edges directed backward by \(\sigma\), and define
\[
k
=
\min_{\sigma\in\Sigma_G(s_0)}
|B_H^R(\sigma)\cap E_H^+|.
\]
Equivalently, \(k\) is the minimum number of residual soft edges whose deletion makes \(D_R^+\) acyclic; hard prerequisite edges cannot be deleted.

\begin{theorem}[Affine fixed-parameter tractability]
\label{thm:affine-fpt}
Suppose the collapsed costs have the uniform affine form
\[
c(v,s)
=
a_v
-
\lambda
\bigl|
\{u\in s\cap R:(u,v)\in E_H^+\}
\bigr|,
\qquad
\lambda>0,
\]
with all feasible costs positive. Exact optimal sequencing is fixed-parameter tractable in \(k\): an optimal sequence can be computed in \(f(k)\cdot(|R|+|E_G|+|E_H|)^{O(1)}\) time, where the polynomial is in the encoding of the residual graph, not the lattice.
\end{theorem}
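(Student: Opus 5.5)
The plan is to show that, under the uniform affine form, the total cost of any valid ordering is an affine function of the number of residual soft edges it directs backward. Optimal sequencing then becomes a minimum-deletion problem on a mixed graph of hard and soft arcs, and that problem is handled by the known FPT result for $\operatorname{MinCSP}(<,\le)$, equivalently Directed Subset Feedback Arc Set~\citep{osipov2024point}.

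\emph{Step 1: cost identity.} Fix $\sigma\in\Sigma_G(s_0)$ and sum the collapsed costs concept by concept. Concept $v$ pays $a_v$ minus $\lambda$ times the number of residual in-neighbours $u$ with $(u,v)\in E_H^+$ that precede $v$ in $\sigma$. Each edge of $E_H^+$ is either forward or backward in $\sigma$, so
\[
C(\sigma)=\sum_{v\in R}a_v-\lambda\bigl(|E_H^+|-|B_H^R(\sigma)\cap E_H^+|\bigr).
\]
Since $\lambda>0$, minimizing $C$ over $\Sigma_G(s_0)$ is the same as minimizing $|B_H^R(\sigma)\cap E_H^+|$, and the optimum equals $\sum_v a_v-\lambda(|E_H^+|-k)$. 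Contradictory soft edges (those with $v\prec_G u$) are backward in every valid ordering. They therefore contribute nothing to the cost via the affine term, which ranges only over $E_H^+$, and nothing to the objective, which is consistent with how $k$ is defined. Positivity of the feasible costs is needed only so that the costs come from a legitimate oracle; it plays no role in the argument.

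\emph{Step 2: equivalence with deletion to acyclicity.} Two directions are needed.

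\begin{itemize}
\item If $\sigma$ is valid, then deleting $B_H^R(\sigma)\cap E_H^+$ from $D_R^+$ leaves only edges that point forward in $\sigma$. Hard edges $E_G[R]$ are forward because $\sigma$ is prerequisite-valid. Hence $\sigma$ topologically orders the remainder, which is acyclic.
\item Conversely, suppose deleting a set $F\subseteq E_H^+$ makes $D_R^+$ acyclic. Any topological order of $D_R^+-F$ respects $E_G[R]$, and hence respects the prerequisite order restricted to $R$. The prerequisites of $R$ that lie outside $R$ lie in $s_0$, so this order is in $\Sigma_G(s_0)$. Its backward residual soft edges are contained in $F$.
\end{itemize}

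Thus $k$ is exactly the minimum size of a deletion set consisting of soft arcs only, with hard arcs undeletable, that makes $D_R^+$ acyclic. Moreover, any such set yields an ordering in linear time.

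\emph{Step 3: invoke the parameterized result.} Encode each hard arc $(u,v)$ as a crisp constraint $x_u<x_v$ and each soft arc as a soft constraint $x_u<x_v$ with unit weight. Crisp constraints can be simulated by $k+1$ parallel copies, or handled directly as undeletable arcs in the Directed Subset FAS formulation. A solution that deletes at most $k$ soft constraints is precisely the deletion set of Step 2. The algorithm of~\citet{osipov2024point} decides this problem and outputs such a set in $f(k)\cdot\mathrm{poly}$ time; iterating $k'=0,1,\dots$ finds the optimum $k$. Constructing $D_R^+$ from $G$ needs reachability, which takes polynomial time in $|R|+|E_G|+|E_H|$. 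A topological sort of $D_R^+-F$ then gives the optimal sequence, and the lattice is never touched.

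\emph{Main obstacle.} The delicate step is the faithful transfer to the cited problem. It must be checked that the result in~\citep{osipov2024point} covers exactly this variant: strict $<$ constraints, crisp hard arcs, unit-weight soft arcs, and a constructive solution rather than only a decision answer. If the cited FPT result is stated for Directed Subset FAS, I would reduce to it as follows. Subdivide each soft arc with a new vertex and make that subdivision arc the only deletable one, or equivalently mark it as the terminal arc in the subset formulation. Replicate each hard arc $k+1$ times so that it is never worth deleting. Then verify that every cycle of $D_R^+$ must contain a soft arc, since $E_G[R]$ alone is acyclic, so the subset-FAS requirement matches "every cycle hits a deleted soft arc."
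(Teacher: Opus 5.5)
Your proposal is correct and follows essentially the same route as the paper. The paper's argument has three steps:
\begin{itemize}
\item the identity $C(\sigma)=C_0+\lambda\,|B_H^R(\sigma)\cap E_H^+|$;
\item identifying $k$ with the minimum deletion of residual soft arcs that makes $D_R^+$ acyclic, with hard prerequisite arcs undeletable;
\item invoking the FPT algorithm of \citet{osipov2024point} for $\operatorname{MinCSP}(<,\le)$ (Directed Subset Feedback Arc Set), then recovering the sequence by topological sort.
\end{itemize}
You do the same, and you spell out both directions of the ordering/deletion-set equivalence, which the paper's sketch defers to the supplement. Your all-strict encoding (crisp $x_u<x_v$ for hard arcs, soft $x_u<x_v$ for residual transfer arcs) is just the special case of directed feedback arc set with undeletable arcs, so it fits within the cited result.
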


\emph{Proof sketch.}
Each satisfied residual soft edge reduces total cost by exactly \(\lambda\), so
\[
C(\sigma)
=
C_0+
\lambda
|B_H^R(\sigma)\cap E_H^+|.
\]
Cost minimization is therefore exactly minimum residual frustration. This is Directed Subset Feedback Arc Set, equivalently \(\operatorname{MinCSP}(<,\leq)\), with soft precedence constraints breakable and hard prerequisites fixed; known algorithms are FPT in the violated-constraint budget~\citep{osipov2024point}. The supplementary material gives the reduction and sequence recovery. The affine assumption is essential to our reduction: for general nonlinear monotone oracles, fixed-parameter tractability in this distance remains open.

% ===========================================================================
\section{Exact Search on the Implicit Lattice}
% ===========================================================================

The landscape tells us where exact planning is feasible; we now ask how to perform it when the lattice is too large to enumerate.

\paragraph{A consistent sum heuristic.}
For each concept \(v\in R\), let \(\bar p_v\) be any certified upper bound satisfying
\[
\bar p_v
\ge
\max_{s\in\mathcal F_v} p(v,s).
\]
The tightest separable choice is \(p_v^*=\max_{s\in\mathcal F_v}p(v,s)\); under a monotone oracle this is analytic: \(p_v^*=p(v,V\setminus(\{v\}\cup\operatorname{desc}_G(v)))\) (Section~4). When no oracle structure is available, the trivial bound \(\bar p_v=1\) remains valid. Define
\[
h_{\mathrm{sum}}(s)
=
\sum_{v\in V\setminus s}
\frac{T_v}{\bar p_v}.
\]

\begin{proposition}[Consistency of the sum heuristic]
\label{prop:sum-consistency}
The heuristic \(h_{\mathrm{sum}}\) is consistent on the collapsed Ariadne lattice and satisfies \(h_{\mathrm{sum}}(s^*)=0\). It is therefore admissible.
\end{proposition}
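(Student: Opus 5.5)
Consistency here means that for every collapsed edge \(s\to s\cup\{v\}\) with \(v\in\mathcal A(s)\),
\[
h_{\mathrm{sum}}(s)\le c(s,v)+h_{\mathrm{sum}}(s\cup\{v\}).
\]
The plan is to verify this one-step inequality directly and then deduce admissibility by the standard telescoping argument along paths of the acyclic collapsed lattice (Corollary~\ref{cor:collapsed-lattice}).

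\textbf{Goal condition.} The terminal state is \(s^*=V\), so \(V\setminus s^*=\emptyset\) and the defining sum is empty. Hence \(h_{\mathrm{sum}}(s^*)=0\).

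\textbf{Edge inequality.} Fix a nonterminal \(s\) and \(v\in\mathcal A(s)\). Because \(v\notin s\), the remaining sets satisfy
\[
V\setminus s=(V\setminus(s\cup\{v\}))\,\dot\cup\,\{v\},
\]
which gives
\[
h_{\mathrm{sum}}(s)-h_{\mathrm{sum}}(s\cup\{v\})=\frac{T_v}{\bar p_v}.
\]
It therefore suffices to show \(T_v/\bar p_v\le T_v/p(v,s)=c(s,v)\).

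The one point needing care is that \(s\in\mathcal F_v\). The states explored by search are those reachable from \(s_0\), so \(s_0\subseteq s\), and \(v\in\mathcal A(s)\) by assumption. This is exactly the definition of \(\mathcal F_v\). We also need \(v\in R\), which holds since \(v\notin s\supseteq s_0\).

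The certified bound then gives \(\bar p_v\ge p(v,s)>0\). Since \(T_v>0\), the map \(x\mapsto T_v/x\) is decreasing on \((0,\infty)\), which yields the required inequality. For the trivial choice \(\bar p_v=1\) the same holds because \(p(v,s)\le 1\).

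\textbf{Admissibility.} Take any path \(s=t_0\to t_1\to\cdots\to t_k=s^*\) in the collapsed lattice. Summing the edge inequality along the path telescopes to
\[
h_{\mathrm{sum}}(s)\le \sum_{j} c(t_{j-1},\cdot)+h_{\mathrm{sum}}(s^*),
\]
and \(h_{\mathrm{sum}}(s^*)=0\). Minimizing over paths gives \(h_{\mathrm{sum}}(s)\le J^*(s)\); by Theorem~\ref{thm:exact-collapse}, \(J^*\) is also the primitive optimal value.

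The only potential obstacle is the membership \(s\in\mathcal F_v\). If the heuristic is evaluated on states not reachable from \(s_0\), \(\bar p_v\) must bound \(p(v,\cdot)\) there too. I would state explicitly that the search graph is restricted to \(\{s\in\mathcal S:s_0\subseteq s\}\).
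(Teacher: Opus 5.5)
Your proof is correct and matches the paper's argument: the one-step difference \(h_{\mathrm{sum}}(s)-h_{\mathrm{sum}}(s\cup\{v\})=T_v/\bar p_v\le T_v/p(v,s)\) gives consistency, and consistency together with \(h_{\mathrm{sum}}(s^*)=0\) gives admissibility by telescoping along any path to the goal. Your explicit check that \(s\in\mathcal F_v\), which requires restricting to states containing \(s_0\), makes precise a point the paper's sketch leaves implicit; the paper can omit it because its A* only generates supersets of \(s_0\).
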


\emph{Proof sketch.}
For every collapsed edge \(s\to s\cup\{v\}\),
\[
h_{\mathrm{sum}}(s)-h_{\mathrm{sum}}(s\cup\{v\})
=
\frac{T_v}{\bar p_v}
\le
\frac{T_v}{p(v,s)}
=
c(v,s).
\]
This is exactly the consistency inequality
\(h_{\mathrm{sum}}(s)\le c(v,s)+h_{\mathrm{sum}}(s\cup\{v\})\); goal-zero consistency implies admissibility. A direct concept-wise admissibility proof is given in the supplementary material. Consistency also ensures that graph-search A* never reopens a closed state: when a state is removed from the priority queue, its \(g\)-value is final.

\paragraph{A* on the implicit lattice.}
The order-ideal graph is represented implicitly. Starting from \(s_0\), expanding a state \(s\) enumerates only its frontier actions \(\mathcal A(s)\) and generates successors \(s\cup\{v\}\) on demand. Standard graph-search A* orders open states by
\[
f(s)=g(s)+h_{\mathrm{sum}}(s)
\]
and terminates when the goal is first removed from the queue. Because the heuristic is consistent and all collapsed costs are positive, each state is expanded at most once and the returned path is optimal. In the worst case, A* may still expand every reachable order ideal, reducing to exhaustive dynamic programming with priority-queue overhead; its advantage is that informative lower bounds can avoid generating large portions of a wide lattice.

\paragraph{Solver comparison.}
We compare full reverse-topological dynamic programming, Dijkstra's algorithm (\(h=0\)), A* with \(h_{\mathrm{sum}}\), and a LAO*-derived solution-graph baseline. All four return identical optimal values to within \(10^{-12}\). A* provides the strongest expansion and wall-clock performance on the wide instances, whereas the additional revision machinery of the LAO*-derived solver offers no benefit after collapse---consistent with Theorem~\ref{thm:exact-collapse}, which leaves no stochastic loops for that machinery to exploit. Detailed counts appear in Section~6 and the supplementary material.

The structural analysis motivates two empirical regimes; we now test where real and constructed instances land.

% ===========================================================================
\section{Experiments}
% ===========================================================================

\paragraph{Setup.}
We evaluate Ariadne on 70,893 mapped interactions from 294 students in an introductory computer science course (ECS32A), organized by a 61-concept prerequisite DAG. The evaluation uses ten prespecified target closures, an empty initial mastery state, and a uniform nominal attempt cost \(T_v=60\), so expected cost is proportional to expected attempts. A deterministic CPU \emph{FrozenMonotonicOracle} serves as both Ariadne's planning oracle and the public evaluator for every condition; on held-out sessions it is only moderately predictive (full-feature AUC \(0.775\); zero-history ablation AUC \(0.611\); no estimate over prerequisite-closed planner states is available), so our conclusions concern the planning landscape induced by this frozen model rather than true learner counterfactuals. The closed-loop design makes the exact Ariadne reference zero-regret by construction; the informative comparisons are the deviations of the remaining methods from that evaluator-defined optimum. Normalized regret of a sequence \(\sigma\) is \(\bigl(C_{\mathrm{eval}}(\sigma)-J^{*}_{\mathrm{eval}}\bigr)/J^{*}_{\mathrm{eval}}\), where \(C_{\mathrm{eval}}\) is its cost and \(J^{*}_{\mathrm{eval}}\) the exact optimum under the frozen evaluator for that target; repetitions are averaged within each target (100 seeds for Random Frontier, 20 per LLM condition, one run otherwise) before the ten targets are weighted equally. Table~\ref{tab:main-results} reports the ten primary conditions: Ariadne, Frequency, BKT-set, and DKT-set planners under Exact and Greedy solvers, plus Random Frontier and Linear Syllabus orderings; Exact rows are computed by graph-search A*, and the LAO*-derived implementation returns identical values (Section~5). Four tool-free LLM conditions evaluated under the identical protocol land in the same easy band---best mean regret \(6.6\times10^{-5}\) (GPT-5.6 SOL, full context), with full context outperforming zero context for both models (Holm-adjusted exact permutation \(p=0.016\))---and are analyzed, with validity audits, in the supplementary material, together with full data processing, oracle validation, and provenance.

\begin{table}[t]
\centering
\small
\caption{Sequence quality under the frozen public evaluator (repetitions averaged within each target; ten targets weighted equally). Ariadne Exact shares the evaluator's planning oracle by design, so its zero regret is a protocol identity rather than an empirical claim about real-student optimality. Mean costs for all conditions appear in the supplementary material.}
\label{tab:main-results}
\begin{tabular}{lr}
\toprule
Condition & Normalized regret \\
\midrule
Ariadne Exact & \(0\) \\
Ariadne Greedy & \(5.934\times10^{-7}\) \\
Frequency Greedy & \(1.047\times10^{-5}\) \\
DKT-set Greedy & \(4.935\times10^{-5}\) \\
DKT-set Exact & \(1.828\times10^{-4}\) \\
BKT-set Greedy & \(1.995\times10^{-4}\) \\
BKT-set Exact & \(2.481\times10^{-4}\) \\
Random Frontier & \(2.633\times10^{-4}\) \\
Frequency Exact & \(2.658\times10^{-4}\) \\
Linear Syllabus & \(3.385\times10^{-4}\) \\
\bottomrule
\end{tabular}
\end{table}

\paragraph{Q1: Does the diagnostic predict sequencing opportunity?}
For the FrozenMonotonicOracle, the target-level opportunity bound satisfies \(\sum_v\Delta_v/J^*<0.2\%\); the BKT- and DKT-derived set oracles expose wider envelopes of approximately \(1.9\%\)--\(4.2\%\) and \(2.1\%\)--\(4.7\%\), respectively. Table~\ref{tab:main-results} confirms that the largest target-equal mean regret is only \(3.385\times10^{-4}\) (\(0.034\%\)), about one sixth of the Frozen-oracle envelope. The table also reveals oracle mismatch. BKT-set and DKT-set Exact match reverse-topological dynamic programming under their own objectives and improve or tie their corresponding Greedy policies internally, yet score worse under the Frozen evaluator. Exact optimization can therefore amplify learner-model error; this reversal is not a solver-correctness failure.

\paragraph{Q2: Why are the ECS32A instances easy?}
The objective leaves little value to optimize, and the prerequisite topology leaves little state space to search. Across the ten closures, width is at most four and the number of reachable ideals is at most 47, making exhaustive dynamic programming effectively instantaneous. This is a double degeneracy: weak feasible-state cost variation makes nearly every valid order competitive, while narrow prerequisite structure makes exact enumeration cheap.

\paragraph{A wider public topology.}
The public Junyi Academy corpus contains 835 exercises and exhibits the opposite structural regime (topology figure in the supplementary material). Its sink-target closures have median width eight and maximum width 18; their median ideal count is 96,608, and 68 closures exceed the \(10^7\)-state enumeration guard. A first-attempt bucketing proxy gives median proxy ratio \(\sum_v\Delta_v/J^*_{\mathrm{proxy}}=5.6\%\) and a maximum of \(12.5\%\), suggesting substantially greater state dependence than in ECS32A. This proxy is not directly comparable to the trained Frozen-oracle diagnostic: students who have already mastered prerequisites may be systematically stronger, inflating apparent state dependence. The proxy is computed only on the observable restriction of each closure (241 closures yield values; median concept coverage \(0.78\)), so it is an exploratory diagnostic rather than a certified full-closure bound. We therefore use Junyi only as topology and diagnostic evidence, not as planner evaluation.

\paragraph{Q3: When does the heuristic reduce exact search?}
On the narrow ECS32A closures, A* and Dijkstra expand the same states. This is expected: the intermediate analytic certificate \(\bar p_v=p(v,V\setminus\{v\})\) used in our experiments leaves a \(2.5\%\)--\(3.2\%\) residual gap at \(s_0\), which exceeds the total cost spread \(\sum_v\Delta_v/J^*<0.2\%\) by an order of magnitude, so \(f\)-values cannot separate optimal from suboptimal states regardless of lattice size. On wider synthetic lattices where \(\sum_v\Delta_v\) is large relative to the certificate slack, the distinction appears. Relative to Dijkstra, \(h_{\mathrm{sum}}\) reduces expansions by about \(2\times\) at width two, \(4\times\) at width three, and \(10\)--\(25\times\) at width five. Replacing the trivial certificate \(\bar p_v=1\) with this analytic bound further tightens search without changing the optimal value; complete counts are reported in the supplementary material.

\paragraph{Q4: Can myopic sequencing be substantially suboptimal?}
For each width \(w\in\{3,5,7,10,15\}\), the sibling-transfer trap has a root \(r\), an antichain of \(w\) siblings \(a_1,\ldots,a_w\), and a sink \(z\), with prerequisite edges \(r\to a_i\to z\). Success probabilities are \(p(v,s)=\sigma(\varphi_v+1.5\sum_u w_{uv}\mathbf{1}[u\in s])\) with \(T_v=60\), \(\varphi_r=1\), \(\varphi_z=0\), \(\varphi_{a_i}=-0.8+0.12i\); transfer weights form a non-prerequisite chain \(w_{a_i,a_{i+1}}=1\). Greedy favors later siblings (higher baseline \(p\)) but the forward order \(a_1,\ldots,a_w\) exploits sequential transfer. The poset width is exactly \(w\), giving \(2^w+2\) reachable ideals. Figure~\ref{fig:trap} shows that DP expands \(2^w+1\) states while A* expands only \(w+3\); greedy regret ranges from \(28.3\%\) to \(45.1\%\). The \(O(w)\) A* profile depends on \(h_{\mathrm{sum}}\) being near-tight on this family and is not a general complexity claim. By contrast, the NP-hardness family of Theorem~3 has \(p\ge1/2\), forcing \(c(v,s)\in[1,2]\) and a multiplicative ratio at most two; hardness there concerns exact optimization, not the magnitude of achievable gains.

\begin{figure}[t]
\centering
\includegraphics[width=0.65\textwidth]{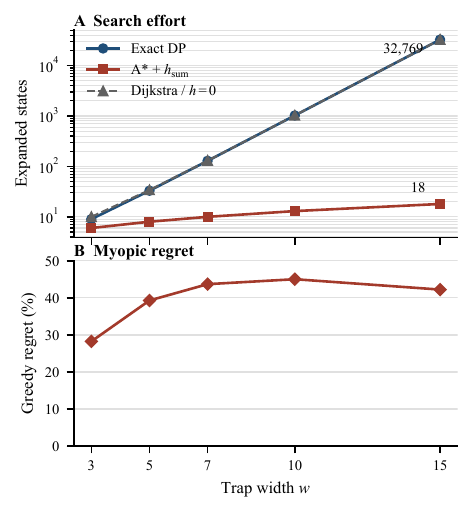}
\caption{Sibling-transfer trap family. Exact DP expands all \(2^w+1\) ideals and Dijkstra nearly as many, whereas A* with \(h_{\mathrm{sum}}\) expands \(w+3\); greedy regret remains \(28.3\%\)--\(45.1\%\). These scalings are family-specific.}
\label{fig:trap}
\end{figure}

All splits, seeds, model checkpoints, LLM prompts and responses, diagnostic artifacts, and aggregation scripts are frozen and hash-addressed; code accompanies the submission.

% ===========================================================================
\section{Conclusion}
% ===========================================================================

Under failure-invariant, time-homogeneous learner dynamics, instructional sequencing collapses exactly to deterministic shortest paths on the order-ideal lattice, yet remains NP-hard in general; residual acyclicity, bounded width, and---for uniform affine transfer---small distance to acyclicity identify tractable regimes. The \(m\Delta\) diagnostic bounds the value of sequencing before optimization, and a consistent heuristic supports exact A* search on the implicit lattice.

Our empirical conclusions are model-relative: the closed-loop ECS32A evaluation makes exact-planner zero regret a protocol identity, not evidence of real-student optimality, and the Junyi analysis provides topology evidence without planner evaluation. The residual-acyclic and FPT results assume explicit nonnegative pairwise transfer (affine costs for the latter) and do not extend automatically to black-box oracles.

Extensions include belief-space planning for failure-dependent updates, nonmonotone state models for forgetting, validated oracles for Junyi-scale evaluation, and robust learner modeling to address misspecification. Planning is a lever; the learner model is its fulcrum.

% ===========================================================================
\bibliographystyle{unsrtnat}
\bibliography{references}

\end{document}